\def\totimes{\mathbin{\tilde\otimes}}
\newtheorem{theorem}{Theorem}
\newenvironment{proof}{\textit{Proof:}}{\hfill$\square$}
\renewcommand{\hat}[1]{\widehat{#1}}
\newcommand{\newmethod}{\texttt{stable-OpInfc}}
\newcommand{\benchmark}{\texttt{OpInfc}}
\newcommand{\pathfig}{./figures}
\begin{document}

\title{Stability-Certified Learning of Control Systems with Quadratic Nonlinearities} 

\author[$\dagger$]{Igor Pontes Duff}
\affil[$\dagger$]{Max Planck Institute of Dynamics of Complex Technical Systems\authorcr
	\email{pontes@mpi-magdeburg.mpg.de}, \orcid{0000-0001-6433-6142}}

\author[$\ddagger$]{Pawan Goyal}
\affil[$\ddagger$]{Max Planck Institute of Dynamics of Complex Technical Systems\authorcr
	\email{goyalp@mpi-magdeburg.mpg.de}, \orcid{0000-0003-3072-7780}}

\author[$\dagger\dagger$]{Peter Benner}
\affil[$\dagger\dagger$]{Max Planck Institute of Dynamics of Complex Technical Systems, Otto-von-Guericke University Magdeburg \authorcr
	\email{benner@mpi-magdeburg.mpg.de}, \orcid{0000-0003-3362-4103}}

\abstract{This work primarily focuses on an operator inference methodology aimed at constructing low-dimensional dynamical models based on a priori hypotheses about their structure, often informed by established physics or expert insights. Stability is a fundamental attribute of dynamical systems, yet it is not always assured in models derived through inference. Our main objective is to develop a method that facilitates the inference of quadratic control dynamical systems with inherent stability guarantees. To this aim, we investigate the stability characteristics of control systems with  energy-preserving nonlinearities, thereby identifying conditions under which such systems are bounded-input bounded-state stable. These insights are subsequently applied to the learning process, yielding inferred models that are inherently stable by design. The efficacy of our proposed framework is demonstrated through a couple of numerical examples. 
	
}

\keywords{Stability, control systems, scientific machine learning, operator inference,  Lyapunov function, energy-preserving systems.}
\novelty{\begin{itemize}
		\item Learning stability-certified control systems with quadratic nonlinearities.
		\item Utilizing a stable matrix parameterization for the certification.
		\item Bounded-input bounded-state stability for quadratic systems with control is shown under parametrization assumptions.
		\item Several numerical examples demonstrate the stability-certified results of the learned models.
\end{itemize}}
\maketitle


\section{Introduction}
Significant developments have been made recently in the fields of learning dynamical systems from data, driven by the availability of large amount of data sets and the demand across various applications such as robotics, epidemiology, and climate science.  First principles model construction often falls short due to the complexity of these applications, paving the way for data-driven methodologies. Among these, the \emph{Dynamic Mode Decomposition} (DMD) together with Koopman operator theory is a common used approach, since it allows to handle complex dynamics by linearizing non-linear systems in a high-dimensional (or even infinite-dimensional) observable space, see \cite{Mez13, morSch10}. Additionally, the DMD framework was extended to allow handling control inputs in \cite{ProBK16}.  Recent advancements have also seen the emergence of \emph{Sparse Identification of Nonlinear Dynamics} (SINDy) for discovering the essential nonlinear terms from extensive libraries through sparse regression, offering an alternative perspective on nonlinear system identification (\cite{brunton2016discovering}). Moreover, the integration of prior knowledge, particularly physics-based, into learning frameworks has also attracted attentions, particularly through the operator inference (OpInf) technique (\cite{morPehW16}), which aim  at the construction  of data-driven (reduced-order) models by focusing on quadratic (or polynomial) dynamics.


Many physical phenomena demonstrate  stability behavior, i.e., their state variables evolve in a bounded region of the state space over extended time horizons. Consequently, the accurate representation of these phenomena necessitates stable differential equations, which is also essential for the long-term numerical integration. However, the critical aspect of stability is often not addressed while learning dynamical systems.  In this work, our main focus is to learn quadratic control  models of the form
\begin{equation}\label{eq:quad_model}
	\dot{\bx}(t) = \bA\bx(t) +  \bH\left(\bx(t)\otimes \bx(t)\right) + \bB\bu(t), \quad \bx(0) = \bx_0,
\end{equation}
where $\bx(t)\in\R^n$ is the state vector, $\bu(t)\in \R^m$ represents the inputs, and $\bA \in \R^{n \times n}$, $\bH \in \R^{n\times n^2}$, and $\bB \in \R^{n\times m}$ are the system matrices.  It is worth mentioning that quadratic models emerge inherently within discretized models of, e.g., fluid mechanics. Furthermore, through a process known as lifting transformation, smooth nonlinear dynamical systems can be transformed into quadratic dynamical systems \cite{morGu11}. Also, this transformation was leveraged to operator inference in \cite{QKPW2020_lift_and_learn} to learn physics-based quadratic models for a given data set, and identifying such a lifted transformation using neural networks was investigated in \cite{morGoyB22}.

 Our prior work concentrated on imposing stability on learned (uncontrolled) linear and quadratic models, see \cite{morGoyPB23, goyal2023guaranteed}. Therein,  we introduced methodologies to ensure local and global stability, leveraging parametrizations for stable matrices  and energy-preserving Hessians to enforce stability by design, mainly inspired by the results on energy-preserving nonlinearities proposed in \cite{schlegel2015long}. The approach \cite{morGoyPB23} not only addressed computational challenges but also bypassed the limitations of conventional stability constraints, thus providing a robust foundation for stable dynamical system modeling.

Building upon this foundation, our current work extends these concepts to quadratic systems with control inputs, enhancing their applicability in controlled environments and broadening the scope of data-driven dynamical system learning. To this aim, firstly in Section \ref{sec:OptInf}, we recapitulate the operator inference framework  for learning quadratic control  systems from (possibly) high dimensional data sets. Then, in Section \ref{sec:BoundQuadControlSystems},  we revisit the parameterizations of stable matrices and energy-preserving Hessians used in \cite{goyal2023guaranteed} to impose stability on learned uncontrolled quadratic models.  Then, we establish our main result, which consists of parametrizations of quadratic control systems that are guaranteed to be bounded-input bounded-state stable. Section \ref{sec:LearnDynStable} leverages the proposed parametrization to learn stable quadratic control systems. Subsequently, Section \ref{sec:ExtGenLyap} extends the presented results to Lyapunov functions of generalized quadratic form. Finally, Section \ref{sec:Num} illustrates the proposed methodology using a coupled of numerical examples, and Section \ref{sec:Conc} concludes the manuscript with summary and open avenues.


\section{Model Inference of quadratic control systems}\label{sec:OptInf}
We now present a brief overview of the Operator Inference (OpInf) methodology (\cite{morPehW16}), focusing specifically on systems with quadratic nonlinearities. 
Our discussion starts with learning quadratic models from high-dimensional data and considers dynamical system having a control input.

We proceed to define the problem of deriving low-dimensional dynamical systems from high-dimensional data originated from a nonlinear (psossibly quadratic) control system
\[ \dot\by(t) = \mathbf{f}(\by(t),\bu(t)), \quad t\geq 0, \] 
where $\by(t) \in \R^N$ is the state vector and $\bu(t)\in\R^m$ is an control input.
We assume the availability of the state snapshots $\by(t)$ at time $t \in \{t_0, t_1,\ldots,t_\cN\}$. These snapshots are aggregated into the snapshot matrix:
\begin{equation}\label{eq:HighDimData}
	\bY = \begin{bmatrix} \by(t_0),\ldots, \by(t_\cN) \end{bmatrix} \in \R^{N\times \cN}.
\end{equation}
Additionally, we assume that the dynamics is actuated by an input function $\bu(t)\in \R^m$ and that we have access to input snapshots at the same time steps, i.e., $\bu(t_0),\ldots, \bu(t_\cN)$ which can be aggregated into the matrix:
\begin{equation*}
\bU = \begin{bmatrix} \bu(t_0),\ldots, \bu(t_{\cN}) \end{bmatrix} \in \R^{m \times \cN}.
\end{equation*}
Although the dynamics of state $\by(t)$ are inherently $N$-dimensional, it can often be effectively represented in a low-dimensional subspace. As a result, we can aim at learning dynamics using the coordinate systems in the low-dimensional subspace. To that end, we identify a low-dimensional representation for $\by(t)$, which is done by determining the projection matrix $\bV \in \R^{N\times n}$, derived from the singular value decomposition of $\bY$ and selecting the $n$ most dominant  left singular vectors. This leads to the computation of the reduced state trajectory as follows:
\begin{equation}\label{eq:proj_step}
	\bX = \bV^\top \bY,
\end{equation}
where $\bX := \begin{bmatrix} \bx(t_0), \ldots, \bx(t_\cN)\end{bmatrix}$ with $\bx(t_i)= \bV^\top \by(t_i).$
With our quadratic model hypothesis, we then aim to learn the system operators from the available data. Precisely, our goal is to learn a quadratic control system of the form~\eqref{eq:quad_model},
where $\bA$, $\bH$, and $\bB$ are the system operators. 

Next, we cast the inference problem, which is as follows. Having the low-dimensional trajectories $\{\bx(t_0), \ldots, \bx(t_\cN)\}$ and the input snapshots $\{\bu(t_0), \ldots, \bu(t_\cN)\}$, we aim to learn operators $\bA$, $\bH$, and $\bB$ in \eqref{eq:quad_model}. At the moment, let us also assume to have (estimated) the derivative information of $\bx$ at time $\{t_0,\ldots,t_\cN\}$, which is denoted by $\dot{\bx}(t_0),\ldots, \dot{\bx}(t_\cN)$. Using this derivative information, we form the following matrix:
\begin{equation}
	\dot{\bX} = \begin{bmatrix} 		\dot{\bx}(t_0), \ldots,\dot{\bx}(t_\cN)	\end{bmatrix}.
\end{equation}
Then, 
determining the operators boils down to solving a least-squares problem, which can be written as
\begin{equation}\label{eq:opinf_optimization}
	\min_{\bA,\bH,\bB} \left\|\dot{\bX} - \begin{bmatrix}\bA,~\bH, ~ \bB\end{bmatrix} \cD \right\|_F,
\end{equation}
with $\cD = {\scriptsize \begin{bmatrix} \bX \\ \bX\totimes \bX \\ \bU \end{bmatrix} }$, where the product $\totimes$ is defined as	$\bG\totimes \bG = \begin{bmatrix}		\bg_1\otimes \bg_1,\ldots, \bg_\cN\otimes \bg_\cN 	\end{bmatrix}$ with $\bg_i$ being the $i$-th column of the matrix $\bG\in \R^{n\times \cN}$, and $\otimes$ denotes the Kronecker product. 
%
Additionally, the reader should notice that whenever the provided data in \eqref{eq:HighDimData} is already low-dimensional and further compression is not possible, then  the projection onto the POD coordinates in equation \eqref{eq:proj_step} is not required. 

Although the optimization problem~\eqref{eq:opinf_optimization} appears straightforward, it poses a couple of major challenges. 
Firstly,  the matrix $\cD$ can be ill-conditioned, thus making the optimization problem \eqref{eq:opinf_optimization} challenging. A way to circumvent this problem is to make use of suitable regularization schemes, and many proposals are made in this direction in the literature, see, e.g., \cite{morYilGBetal20,mcquarrie2020data,morBenGHetal20}.  

An important challenge---one of the most crucial ones---is related to  the stability of the inferred models. When the optimization problem \eqref{eq:opinf_optimization} is solved, then the inferred operators only aim to minimize the specific design objective. However, solving \eqref{eq:opinf_optimization} does not guarantee that the resulting dynamical system will be stable; The problem of imposing boundedness for quadratic system without control was studied in \cite{kaptanoglu2021promoting} using soft constraints. More recently, the authors in \cite{goyal2023guaranteed} proposed a parametrization for the system operators allowing to impose different types of stability properties on the learned uncontrolled dynamical models, such as local stability, global stability and the existence of trapping regions. In this paper, we build upon the concepts established in \cite{goyal2023guaranteed} for uncontrolled quadratic systems and extend those results to quadratic control system of the form \eqref{eq:quad_model}.

\section{Stability  for quadratic control systems}\label{sec:BoundQuadControlSystems}
In this section, our main goal is to parametrize quadratic control systems that are stable. To this aim,  we start by short reviewing the parametrization of stable matrices proposed in \cite{gillis2017computing}). Thereafter, we discuss the results presented in \cite{goyal2023guaranteed} allowing to parametrize energy preserving quadratic nonlinearities. Based on these, we subsequently characterize boundness for quadratic control systems with energy-preserving nonlinearities. These results will then be used to impose stability while learning of quadratic control systems.
\subsection{Parametrization of a stable matrix}
A linear dynamical system 
\begin{equation}\label{eq:LinearDyn}	\dot{\bx}(t) = \bA\bx(t),
	\end{equation}
where $\bA \in \R^{n \times n}$ is said to be asymptotically stable when all eigenvalues of the matrix $\bA$ lie strictly in the left half complex plane. In this case, the matrix $\bA$ is called Hurwitz.

In the light of this, an important characterization of stable matrices is provided in \cite[Lemma~1]{gillis2017computing}, which states that any Hurwitz matrix $\bA$  can be expressed as:
\begin{equation}\label{eq:stable_matrix_decomposition}
	\bA = (\bJ - \bR)\bQ,
\end{equation}
where $\bJ = -\bJ^{\top}$ is a skew-symmetric matrix,  $\bR = \bR^{\top} \succ 0$, and $\bQ = \bQ^\top \succ 0$ are symmetric positive definite matrices. Moreover, if $\bA$ can be expressed as in \eqref{eq:stable_matrix_decomposition}, then
$\bV(\bx) = \frac{1}{2}\bx^{\top}\bQ\bx $
is a Lyapunov function for the linear dynamical system \eqref{eq:LinearDyn}. In particular, the system \eqref{eq:LinearDyn} has $\bE(\bx) = \frac{1}{2} \bx^{\top}\bx$ as a (strict) Lyapunov function if and only if the matrix $\bA$ can be decomposed as
\begin{equation}\label{eq:ParMonStable}
 \bA = \bJ - \bR, 
\end{equation}
for $\bJ = -\bJ^{\top}$  and $\bR = \bR^{\top} \succ 0$. It this case, we say that the system \eqref{eq:LinearDyn} is \emph{monotonically stable}, because the 2-norm of the state, i.e., $\|\bx(t)\|_2$, always decreases with time. 

It is worth mentioning that the authors in  \cite{morGoyPB23} proposed a framework to learn linear stable dynamical system by levering the parametrization \eqref{eq:stable_matrix_decomposition}. Additionally, in \cite{goyal2023guaranteed}, this parametrization of $\bA$ together with the notion of energy preserving quadratic nonlinearities plays a crucial role in learning (uncontrolled) quadratic models from data. In what follows, we recall the notion of energy preserving quadratic nonlinearities and their parametrization.

\subsection{Energy preserving quadratic nonlinearity}
Here, we examine quadratic dynamical systems as described in \eqref{eq:quad_model} for which the quadratic nonlinearity satisfies some algebraic constraints, as proposed e.g., in~\cite{lorenz1963deterministic,schlegel2015long}. We refer the Hessian matrix, or quadratic matrix $\bH$ in \eqref{eq:quad_model} to as energy-preserving when it satisfies the following criteria:
\begin{equation}\label{eq:EnergPreservingCondition}
	\bH_{ijk} + \bH_{ikj} + \bH_{jik} + \bH_{jki} + \bH_{kij} + \bH_{kji} = 0, 
\end{equation}
for each $ i,j,k \in \{1, \dots, n\}$, with $\bH_{ijk} := e_i^{\top} \bH(e_j \otimes e_k)$, where $e_i \in \R^n$ is the $i$th canonical basis vector. The condition \eqref{eq:EnergPreservingCondition} can be expressed using  Kronecker product notation, see \cite{goyal2023guaranteed}, as follows:
\begin{equation}\label{eq:EnergPreservingConditionKron}
	\bz^{\top}\bH(\bz\otimes \bz) = 0, \quad \forall \bz \in \R^n. 
\end{equation}
Energy preserving nonlinearities  typically appear in finite element discretization of fluid mechanical models with certain boundary conditions, see, e.g.,  \cite{holmes2012turbulence, schlichting2016boundary}, 
as well as, in magneto-hydrodynamics applications, see \cite{kaptanoglu2021structure}. 
For such uncontrolled quadratic systems (the system \eqref{eq:quad_model} with $\bu \equiv 0$) with energy-preserving Hessian, it is possible to establish conditions that ensure the system's energy, defined by $\bE(\bx(t)) := \frac{1}{2} \bx^{\top}(t)\bx(t) = \frac{1}{2}\|\bx(t)\|_2^2$, decreases in a strictly monotonic fashion for all trajectories, see \cite{schlegel2015long,goyal2023guaranteed}. Additionally, the authors in \cite{goyal2023guaranteed} (Lemma 2) showed that a Hessian matrix $\bH \in \R^{n \times n^2}$ satisfying \eqref{eq:EnergPreservingConditionKron} can be parametrized without loss of generality as 
\begin{equation}\label{eq:str_H}
	\bH = \begin{bmatrix} \bH_1 & \ldots& \bH_n \end{bmatrix},
\end{equation}
with $\bH_i \in \Rnn$ being skew-symmetric, i.e., $\bH_i = -\bH_i^\top$. As a consequence, this parametrization is leveraged in the learning process in \cite{goyal2023guaranteed}, leading to the inference of stable quadratic (uncontrolled) models. Furthermore, the authors in \cite{goyal2023guaranteed} generalized these results to the case of more general quadratic Lyapunov functions.

\subsection{Bounded-input bounded-state stability result}
Based on the results presented so far, we now proceed further  to establish our main results of this paper on the stability for quadratic control systems. 

\begin{theorem}\label{theo:BoundEq} Consider a quadratic control  system as in \eqref{eq:quad_model}.  Assume that the matrix $\bA\in \R^{n \times n}$ is monotonically stable and can be decomposed as $\bA = \bJ-\bR$, where $\bJ = -\bJ^{\top}$  and $\bR = \bR^{\top} \succ 0$, and $\bH \in \R^{n\times n^2}$ is an energy-preserving Hessian. Then, if the input function $\bu \in L_{\infty}$, then the state vector $\bx(t)$ monotonically converges to the interior of the ball $\cB_{r}(0)$, where 
	\[ \displaystyle r = \dfrac{\|\bB\|_2\|\bu\|_{L_{\infty}}}{\sigma_{\min}(\bR)}, \]
$\sigma_{\min}(\cdot)$ is the minimum singular value of a matrix and $\|\bu\|_{L_{\infty}} = \texttt{ess sup}_{t\geq 0}\|\bu(t)\|_2$ . Furthermore, for every input $\bu \in L_{\infty}$, the state vector is bounded 
\begin{equation}\label{eq:bound_state}
 \|\bx(t)\|_2 \leq \max\{\bx_0, r\} ;
\end{equation}
thus, the quadratic control system is bounded-input bounded-state stable.
\end{theorem}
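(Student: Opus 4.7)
The natural candidate is the energy $\bE(\bx) = \tfrac{1}{2}\|\bx\|_2^2$, which I would differentiate along trajectories of \eqref{eq:quad_model}. Splitting by terms,
\[
\dot\bE(\bx(t)) = \bx^\top\bA\bx + \bx^\top\bH(\bx\otimes\bx) + \bx^\top\bB\bu.
\]
The decomposition $\bA=\bJ-\bR$ with $\bJ^\top=-\bJ$ and $\bR=\bR^\top\succ 0$ kills the skew-symmetric contribution and leaves $\bx^\top\bA\bx=-\bx^\top\bR\bx$. The energy-preservation condition \eqref{eq:EnergPreservingConditionKron} eliminates the Hessian term, giving $\bx^\top\bH(\bx\otimes\bx)=0$. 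For the input term I would apply Cauchy–Schwarz and the bound $\|\bu(t)\|_2\le\|\bu\|_{L_\infty}$ to get $\bx^\top\bB\bu \le \|\bx\|_2\|\bB\|_2\|\bu\|_{L_\infty}$. Combined with $\bx^\top\bR\bx \ge \sigma_{\min}(\bR)\|\bx\|_2^2$, this yields
\[
\dot\bE(\bx(t)) \le \|\bx(t)\|_2\bigl(-\sigma_{\min}(\bR)\,\|\bx(t)\|_2 + \|\bB\|_2\|\bu\|_{L_\infty}\bigr).
\]

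From this inequality the threshold $r=\|\bB\|_2\|\bu\|_{L_\infty}/\sigma_{\min}(\bR)$ emerges immediately: whenever $\|\bx(t)\|_2>r$ the right-hand side is strictly negative, so $\bE$ (and hence $\|\bx(t)\|_2$) is strictly decreasing. This gives the monotone convergence of trajectories to the interior of $\cB_r(0)$ stated in the theorem.

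To obtain the uniform bound \eqref{eq:bound_state}, I would run a standard invariance/contradiction argument. Set $M=\max\{\|\bx_0\|_2,r\}$ and suppose, for contradiction, that $\|\bx(t_1)\|_2>M$ for some $t_1>0$. By continuity, let $t_0=\sup\{s\in[0,t_1]:\|\bx(s)\|_2\le M\}$; then $\|\bx(t_0)\|_2=M$ and $\|\bx(t)\|_2>M\ge r$ on $(t_0,t_1]$. On that interval the previous inequality forces $\dot\bE<0$, so $\bE(\bx(t_1))<\bE(\bx(t_0))$, i.e., $\|\bx(t_1)\|_2<M$, a contradiction. Hence \eqref{eq:bound_state} holds and the system is bounded-input bounded-state stable.

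The main technical obstacle is genuinely mild: the two algebraic identities (skew-symmetry cancellation and energy preservation) are immediate from the parametrizations recalled in Section \ref{sec:BoundQuadControlSystems}, so the only point requiring some care is the invariance argument for initial conditions with $\|\bx_0\|_2<r$, where I need the continuity/contradiction step above rather than a direct integration, since $\bE$ is allowed to grow toward $r$ from below. Everything else reduces to Cauchy–Schwarz and the Rayleigh bound for $\bR$.
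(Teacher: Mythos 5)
Your proposal follows essentially the same route as the paper's proof: the energy $\bE(\bx)=\tfrac12\bx^\top\bx$ as Lyapunov candidate, cancellation of the skew-symmetric and energy-preserving terms, Cauchy--Schwarz on the input term, and the Rayleigh bound $\bx^\top\bR\bx\ge\sigma_{\min}(\bR)\|\bx\|_2^2$ to obtain the same radius $r$. Your final continuity/contradiction argument for the bound \eqref{eq:bound_state} is in fact spelled out more carefully than in the paper, which passes over that step rather quickly.
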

\begin{proof}
Let us consider the energy function $\bE(\bx) = \frac{1}{2}\bx^{\top}\bx$. We then show that this function is a Lyapunov function outside of $\cB_{r}(0)$, which proves the result.  We utilize the parameterization of the matrix $\bA = \bJ-\bR$, where $\bJ = -\bJ^{\top}$  and $\bR = \bR^{\top} \succ 0$ and the matrix $\bH$ to be energy-preserving . To this aim, the derivative of $\bE(\bx(t))$ along the trajectory $\bx(t)$ is given by
\begin{align*}
	\dot{\bE}(\bx(t)) &= \bx(t)^\top\dot\bx(t)\\ 
	&= \bx(t)^\top \left(\bA \bx(t) + \bH\left(\bx(t)\otimes \bx(t)\right) +\bB\bu(t)\right)\\
	& = \bx(t)^\top \left(\left(\bJ - \bR\right) \bx(t) + \bH\left(\bx(t)\otimes \bx(t)\right) +\bB\bu(t)\right)\\
	& = - \bx(t)^\top\bR \bx(t) +  \cancelto{0}{\bx(t)^\top\bH\left(\bx(t)\otimes \bx(t)\right)} \\ &\qquad +\bx(t)^{\top}\bB\bu(t)\\
	& \leq -\sigma_{\min}(\bR)\|\bx(t)\|^2_2 + \|\bB\|_2\|\bu\|_{L_{\infty}}\|\bx(t)\|_2.
\end{align*}
Define $ r = \dfrac{\|\bB\|_2\|\bu\|_{L_{\infty}}}{\sigma_{\min}(\bR)}$. Then, $\dot{\bE}(\bx(t))  < 0$ for $\|\bx(t)\|_2 >r$. Hence, $\bE(\bx(t))$ is a Lyapunov function for $\|\bx(t)\|> r$, and hence, it is monotonically decreasing outside of the ball $\cB_r(0)$. As a consequence, the state norm $\|\bx(t)\|_2$ is also a monotonically decreasing function outside of $\cB_r(0)$. This implies that $\|\bx_0\|_2 \geq \|\bx(t)\|$ when $\|\bx_0\|_2 \geq r$, which proves the result in \eqref{eq:bound_state}. As a consequence, every input $\bu \in L_{\infty}$ will lead to bounded trajectories and the system is bounded-input bounded-state stable. 
\end{proof}

Theorem \ref{theo:BoundEq} shows that if $\bA$ is monotonically stable and $\bH$ is energy preserving, the quadratic control system of the form \eqref{eq:quad_model} is bounded-input bounded-state stable, i.e., every input $\bu \in L_{\infty}$ will lead to bounded trajectories $\bx(t)$ satisfying \eqref{eq:bound_state}. Additionally, to prove this result we use the state energy as a Lyapunov function. 

 Theorem \ref{theo:BoundEq} together with the parametrization of monotonically stable matrices in \eqref{eq:ParMonStable} and energy-preserving Hessians in \eqref{eq:str_H} will allow us to learn quadratic control systems with a stable behavior. 

\section{Learning bounded control systems}\label{sec:LearnDynStable}

Based on Theorem \ref{theo:BoundEq}, we establish an inference framework to learn bounded quadratic control  models of the form \eqref{eq:quad_model}, via the designated $\bX$, $\bU$ and $\dot{\bX}$ dataset. The inference problem is formulated as:
\begin{equation}\label{eq:stable_learning}
	\begin{aligned}
 \underset{\hat\bJ, \hat\bR, \hat\bH_1,\ldots,\hat\bH_n, \hat\bB}{\arg\min}  \left\|\dot{\bX}-\hat\bA\bX - \hat \bH\bX^\otimes -\hat\bB\bU \right\|_F,
 	\\ \qquad \text{where} ~~ \hat\bA = (\hat\bJ - \hat\bR)  \quad \text{and} \quad \hat\bH = \begin{bmatrix}\hat\bH_1,\ldots,\hat\bH_n\end{bmatrix},
		\\ \qquad \text{subject to} ~~ \hat\bJ = -\hat\bJ^\top,  \hat\bR =  \hat\bR^\top \succ 0,~\text{and}\\
		\hspace{2.5cm}\hat{\bH}_i = -\hat{\bH}^{\top}_i,~i\in\{1,\ldots,n\}.&
	\end{aligned}
\end{equation}
Upon determining the optimal set $(\bJ, \bR, \bH_1, \ldots, \bH_n, \bB)$ solving \eqref{eq:stable_learning}, the matrices $\bA$ and $\bH$ are constructed as:
\begin{equation}
	\bA = (\bJ - \bR), \quad \bH = \left[\bH_1, \ldots, \bH_n \right],
\end{equation}
yielding a quadratic control  system in the form of \eqref{eq:quad_model}, which is guaranteed to be stable in the view of Theorem~\ref{theo:BoundEq}. Notice that the problem formulation in \eqref{eq:stable_learning} imposes certain constraints on the matrices. To circumvent these restrictions, as used in \cite{morGoyPB23, goyal2023guaranteed}, skew-symmetric matrices $\hat\bJ$ (or $\hat\bH_k$) and symmetric positive (semi)definite matrices $\tilde\bR$  can be parameterized as
\begin{equation}\label{eq:par_ss}
	\hat\bJ = \bar\bJ - \bar\bJ^\top, \quad \text{and}\quad 	\tilde\bR = {\bar\bR}\bar\bR^\top, 
\end{equation}
where $\bar\bJ, {\bar\bR}\in \R^{n\times n}$ are square matrices with any constraints. With this parametrization, \eqref{eq:stable_learning} becomes an unconstrained optimization problem. However, due to the lack of analytical solution to \eqref{eq:stable_learning}, we solve the problem using a gradient-based approach. 


\section{Extension to more general quadratic Lyapunov functions}\label{sec:ExtGenLyap}

Theorem \ref{theo:BoundEq} shows that a quadratic control system with $\bA$ monotonically stable and $\bH$ energy preserving is guaranteed to be bounded-input bounded-state stable. We have proved the result using the state energy $\bE(\bx) = \frac{1}{2}\bx^{\top}\bx$ as the Lyapunov function. In this section, we sketch an extension of this result to the case for which more general quadratic Lyapunov functions are considered, i.e.,  functions of the form $\bV(\bx) =  \bx^{\top}\bQ\bx$, where  $\bQ = \bQ^\top \succ 0$ is a symmetric positive definite matrix. 

Let us assume that the matrices $\bA$ and $\bH$ of  a quadratic control system of the form \eqref{eq:quad_model} can be written as 
\begin{align}\label{eq:GenPar}
\bA =  (\bJ - \bR)\bQ\quad \text{and} \quad \bH = \begin{bmatrix} \bH_1\bQ & \ldots& \bH_n\bQ \end{bmatrix}.
\end{align}
In this case, $\bA$ is a Hurwitz matrix and $\bH$ is a generalized energy-preserving Hessian (see \cite{goyal2023guaranteed}). With similar arguments as those in the proof of Theorem \ref{theo:BoundEq}, one can show that for bounded inputs $\bu \in L_{\infty}$, the state $\bx(t)$ also has a bounded behavior. Indeed, to this aim, one needs to use $\bV(\bx)$ as a Lyapunov function, and the result follows straightforwardly. As a consequence, the parametrization in \eqref{eq:GenPar} can be leveraged within the learning process, i.e., the optimization problem \eqref{eq:stable_learning} can incorporate this more general parametrization, thus yielding inferred quadratic control systems to be stable by construction. 

%
\section{Numerical results}\label{sec:Num}
In this section, we assess the efficacy of the methodology outlined in \eqref{eq:stable_learning}, referred to herein as \newmethod, through a coupled of numerical examples.  
We compare our approach with operator inference \cite{morPehW16}, which we denote as (\benchmark).
All experiments are carried out using \texttt{PyTorch}, with $12,000$ updates with the Adam optimizer (\cite{kingma2014adam}) and a triangular cyclic learning rate ranging from $10^{-6}$ to $10^{-2}$. Additionally, we regularize the matrix $\bH$ in quadratic systems by adding $10^{-4}\cdot \|\bH\|_{l_1}$ in the loss function, where $\|\cdot \|_{l_1}$ denotes $l_1$-norm. The initial values for the matrix coefficients are randomly generated from a Gaussian distribution with a mean of $0$ and standard deviation of $0.1$. 

\subsection{Low-dimensional example I}
Our first numerical example consists of a low-dimensional quadratic control system of the form \eqref{eq:quad_model}, where
\begin{equation}\label{eq:quad_2d_model}
	\bA = \spalignmat[r]{-1 1; -1 -2},  ~~ \bH =\spalignmat[r]{0 1 0 0; -1 0  0 0}, ~~\text{and} ~~ \bB =\spalignmat[r]{1;1}. 
\end{equation}
We collect the data with zero initial condition and two different training input functions of the form 
\begin{equation}\label{eq:2d_train_inputs}
	\bu(t) = \sin(f_1 t)e^{-f_2t} + \sin(g_1 t)e^{-g_2t},
\end{equation}
where $f_i \in \mathbb{Z}$, $i = \{1,2\}$ are randomly chosen integers between $0$ and $5$, and $g_i\in \R$, $i = \{1,2\}$ are randomly chosen real numbers between $0$ and $0.5$. 
We collect $200$ points for each training input in the time-span of $[0,10]$. 
Then, we learn quadratic control models using \newmethod~and \benchmark. Since the data is low-dimensional, the proper orthogonal decomposition step in \eqref{eq:proj_step} is not  performed in this example.


For comparison, we consider  two test control inputs of the form 
\begin{subequations}
	\begin{align*}
		\bu_1(t) &= \sin(t)e^{(-0.2 \cdot t)} + \sin(2t)e^{(-0.6\cdot t)} 
		 + \cos(3 t)e^{(-t)}\\
		\bu_2(t) &= -\sin(2t)e^{(-0.1 \cdot t)} - \sin(t)e^{(-0.3\cdot t)}
		 + \cos(4t)e^{(-0.5t)}.
	\end{align*}
\end{subequations}
Note that the testing inputs are very different than from the training ones (see \eqref{eq:2d_train_inputs}). Next, we compare the time-domain simulations of the learned models with the ground truth and the results are depicted in \Cref{fig:lowdim_example}. We notice a faithful learning of the underlying models using both approaches; however, the proposed methods \newmethod\ ensures stability by construction for any other selected input.

  \begin{figure*}[tb]
	\begin{center}
		\begin{subfigure}[b]{0.8\textwidth}
			\includegraphics[width = 1\textwidth]{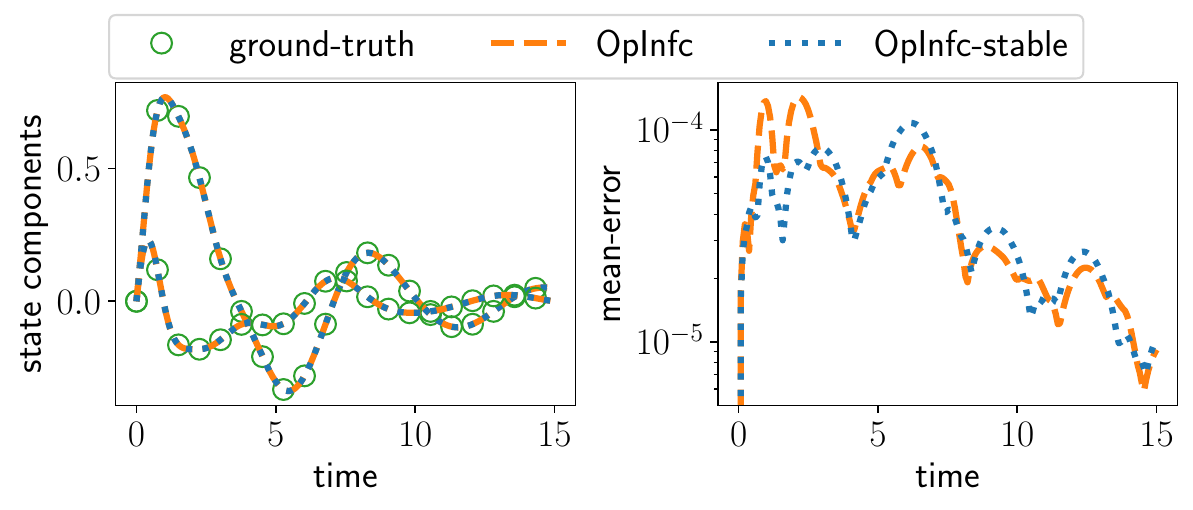}
			\caption{For the testing input $\bu_1$.}
		\end{subfigure}
		\begin{subfigure}[b]{0.8\textwidth}
			\includegraphics[width = 1\textwidth]{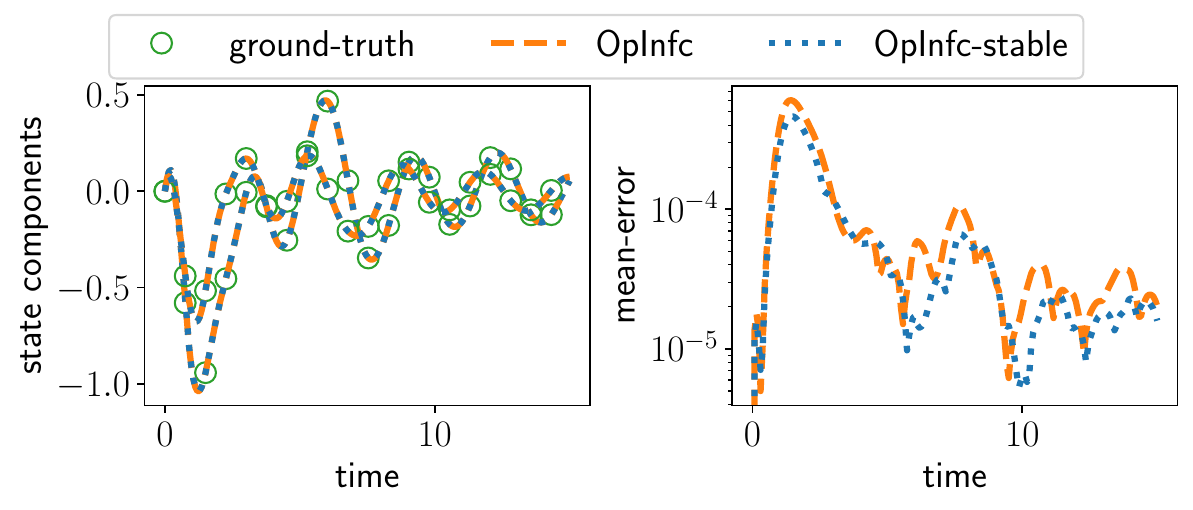}
			\caption{For the testing input $\bu_2$.}
		\end{subfigure}
		\caption{Low-dimensional example I:  A performance test for testing control inputs of the inferred models.}
		\label{fig:lowdim_example}
	\end{center}
\end{figure*}

\subsection{Low-dimensional example II}
Our second numerical example, we consider a slightly different  quadratic control system than the previous example whose the matrix $\bA$ is as follows:
\[ \bA = 0.01\spalignmat[r]{-1 1; -1 -2}, \]
and the matrices $\bH$ and $\bB$ are the same as \eqref{eq:quad_2d_model}. We collect training data using zero initial conditions and two controlled inputs with the same setting as the previous example. 
%
Moreover, for this example, we add Gaussian noise of zero mean and $0.02$ standard derivation in the training data. Then, we learn quadratic controlled inputs using \benchmark~and \newmethod. We compare the qualities of these two models using testing control inputs, similar to the previous example. Particularly, to test the stability of both models, we use high-magnitude test control inputs as follows:

\begin{subequations}
	\begin{align*}
		\bw_1(t) &= 10\cdot \Big(\sin(t)e^{(-0.2 \cdot t)} + \sin(2t)e^{(-0.6\cdot t)} 
		+ \cos(3 t)e^{(-t)}\Big)\\
		\bw_2(t) &= 10\cdot\Big(-\sin(2t)e^{(-0.1 \cdot t)} - \sin(t)e^{(-0.3\cdot t)} 
			+ \cos(4t)e^{(-0.5t)}\Big).
	\end{align*}
\end{subequations}

The time-domain simulations using the learned models for these two test control inputs are shown in \Cref{fig:lowdim_example_2}. We notice that \benchmark~yields unstable behaviors, particularly for the control $\bw_2$, whereas \newmethod~results into the models which are stable by construction and this phenomena is also numerically observed. 

\begin{figure*}[tb]
	\begin{center}
		\begin{subfigure}[b]{0.8\textwidth}
			\includegraphics[width = 1\textwidth]{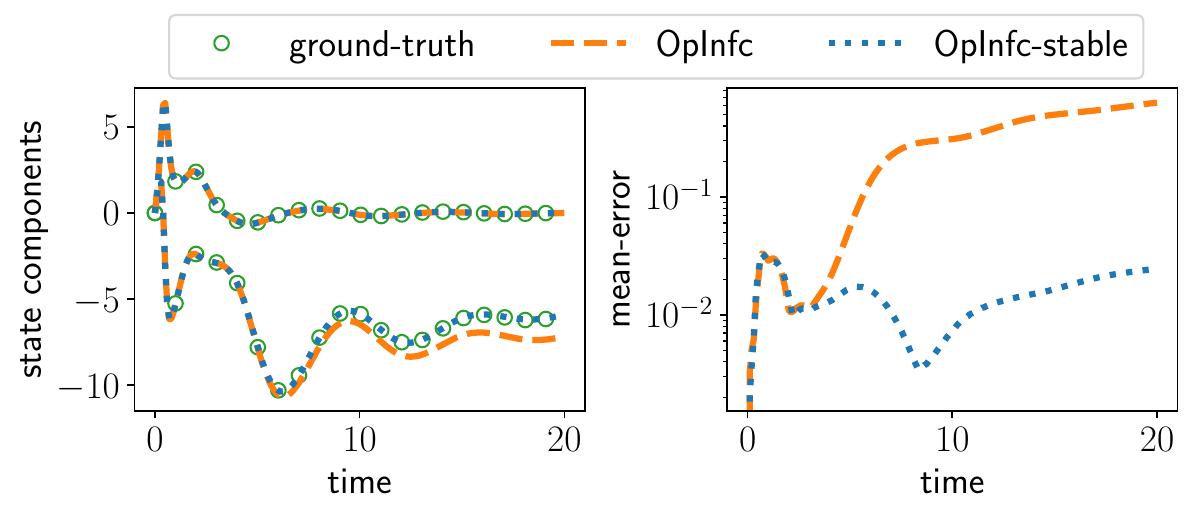}
			\caption{For the testing input $\bw_1$.}
		\end{subfigure}
		\begin{subfigure}[b]{0.8\textwidth}
			\includegraphics[width = 1\textwidth]{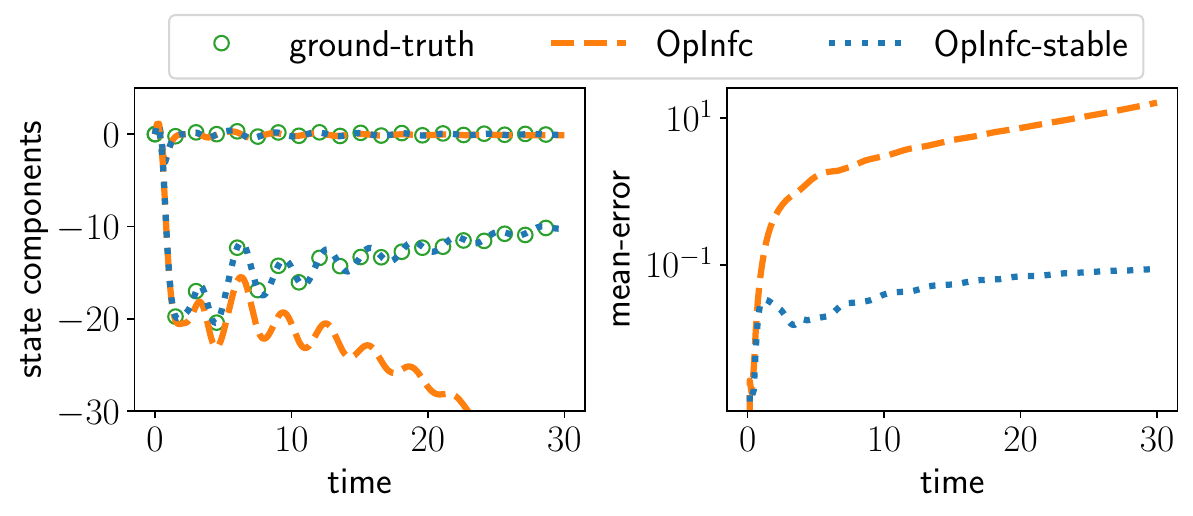}
			\caption{For the testing input $\bw_2$.}
		\end{subfigure}
		\caption{Low-dimensional example II:  A performance test for testing control inputs of the inferred models.}
		\label{fig:lowdim_example_2}
	\end{center}
\end{figure*}

\subsection{High-dimensional Burgers' example}
In our next example, we consider viscus Burgers' example, whose  governing equation is as follows:
\begin{equation}
	\begin{aligned}
		&\dfrac{\partial v}{\partial t} + v\dfrac{\partial v}{\partial \xi} = \mu \dfrac{\partial^2}{\partial \xi^2} + f(\xi,t),\\
		&v(0,t) = 0, ~~v(L,t) = 0,\\
		&v(\xi,0) = 0,
	\end{aligned}
\end{equation}
where $\xi\in [0,L]$ and $t$ denote space and time, respectively, and $v(\xi,t)$ denotes the state variable at the spatial location $\xi$ and at time $t$. We set $\mu = 0.05$ and $L = 2$. Moreover, $f(\xi,t)$ denotes a source term, and in this example, we assume that the source term $f(\xi,t)$ is separable, i.e., $b(\xi)u(t)$. Additionally, we consider \[b(\xi) = \cos\left(\left(\dfrac{\xi}{L}-1\right)\dfrac{\pi}{2}\right).\]
Note that the consider Burgers' example has Dirichlet boundary conditions at both boundary ends. Hence, the quadratic term is energy-preserving. 

We discretize the governing equation using a finite difference scheme by considering $251$ points in the space. For generating training data, we consider the control input $\bu(t)$ of the form
\begin{equation}\label{eq:burgers_train_inputs}
	\bu(t) = \sin(f_1 t)e^{-g_1t} + \sin(f_2 t)e^{-g_2t},
\end{equation}
where $f_1$ and $f_2$ are randomly drawn from a Gaussian distribution of $\cN(0,2)$ and $g_1$ and $g_2$ are randomly drawn from a uniform distribution of $\cU(0.1, 1.1)$. We consider $20$ different training inputs and for each train input, we take $1001$ points at equidistant in the time $[0,10]$. 

Towards learning quadratic control models, we first aim at determining a suitable low-dimensional representation of the high dimensional data. It is done by means of singular value deposition of the training data. We project the high-dimensional data onto a lower dimensional subspace using the most dominant left singular vectors. We take nine most dominant ones which captures more than $99.90\%$ energy present in the training data. Furthermore, to learn quadratic models, we require derivative information, which is estimate using 5-order stencils. 

Next, we learn quadratic models using \benchmark~and \newmethod~using the projected low-dimensional data. To capture the qualities of both these learned models, we consider testing control inputs, which takes the following form:
\begin{equation}\label{eq:burgers_test_inputs}
	\bu(t) = \sin(f_1 t)e^{-g_1t} + \sin(f_2 t)e^{-g_2t} + \cos(f_3 t)e^{-g_3t},
\end{equation}
where $f_i \in \cN(0,2)$ and $g_i \in \cU(0.1, 1.1)$, $i \in \{1,2,3\}$. We run testing for $10$ different testing control inputs which are quite different than the training ones. Among $10$ test cases, for one case, we present time-domain solutions obtained using the learned models and compare with the ground truth in \Cref{fig:burgersexample_timedomain}. For a comparison for all $10$ test cases, we compute the following measure:
\begin{equation}\label{eq:err_measure}
	\texttt{err} = \texttt{mean}\left(\bX^{\texttt{ground-truth}} - \bX^{\texttt{learned}}\right),
\end{equation}
where $\bX^{\texttt{ground-truth}}$ and $\bX^{\texttt{learned}}$ contain the solutions vector at all time $t$ for the ground truth and learned quadratic models, respectively. Based on the measure \eqref{eq:err_measure}, we compute the errors based on the solutions obtained using \benchmark~and \newmethod~and plot in \Cref{fig:burgersexample_errorplot}. We notice that a slightly better performance for \newmethod~despite enforcing stability parameterization.

  \begin{figure}[t!]
		\begin{center}
		\begin{subfigure}[b]{0.8\textwidth}
			\includegraphics[height=4.8cm]{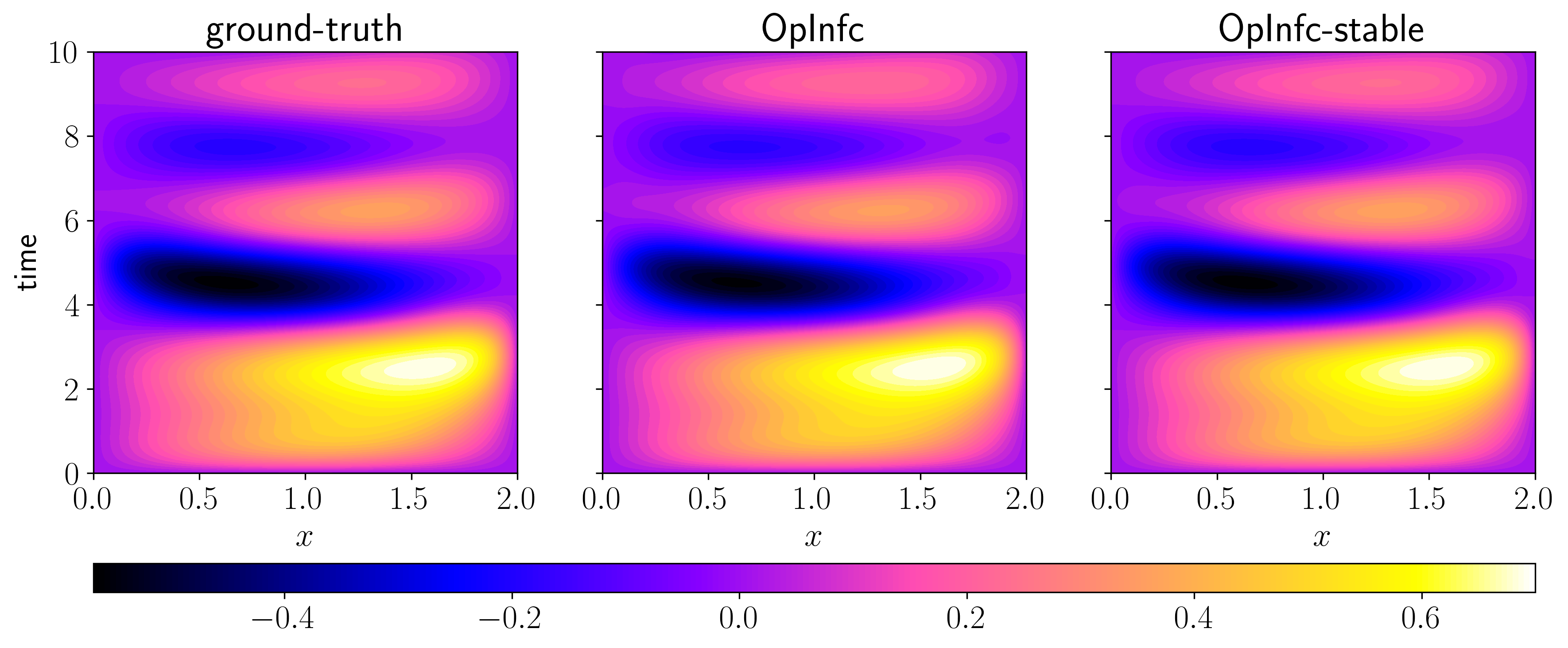}
			\caption{Time-domain response.}
		\end{subfigure}
		\begin{subfigure}[b]{0.55\textwidth}
			\includegraphics[height=4.8cm]{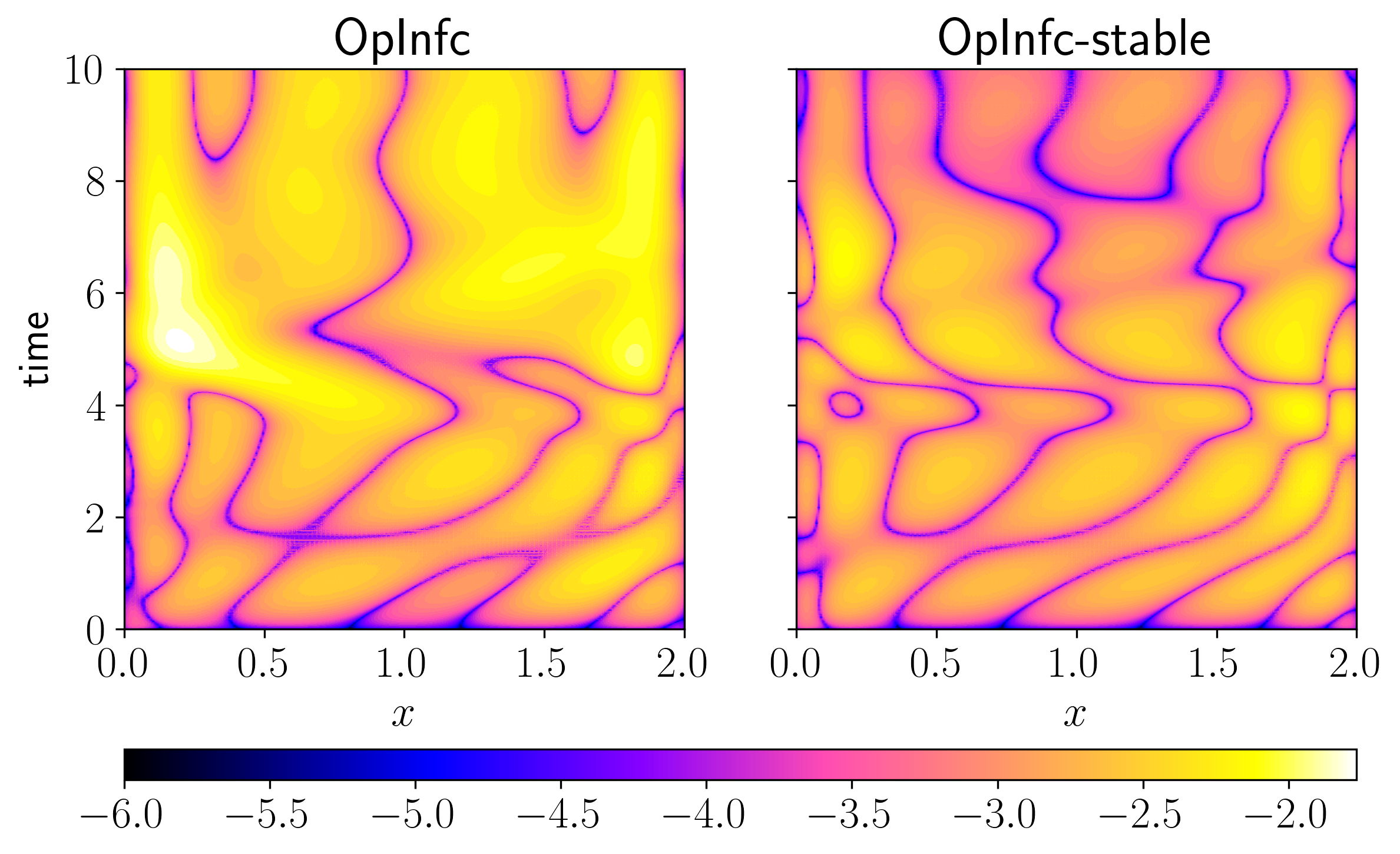}
			\caption{Absolute error in a log-scale.}
		\end{subfigure}
		\caption{Burgers' example:  A performance test for a testing control input of the inferred models.}
		\label{fig:burgersexample_timedomain}
	\end{center}
\end{figure}

\begin{figure}[t!]
	\begin{center}
		\includegraphics[height=4.8cm]{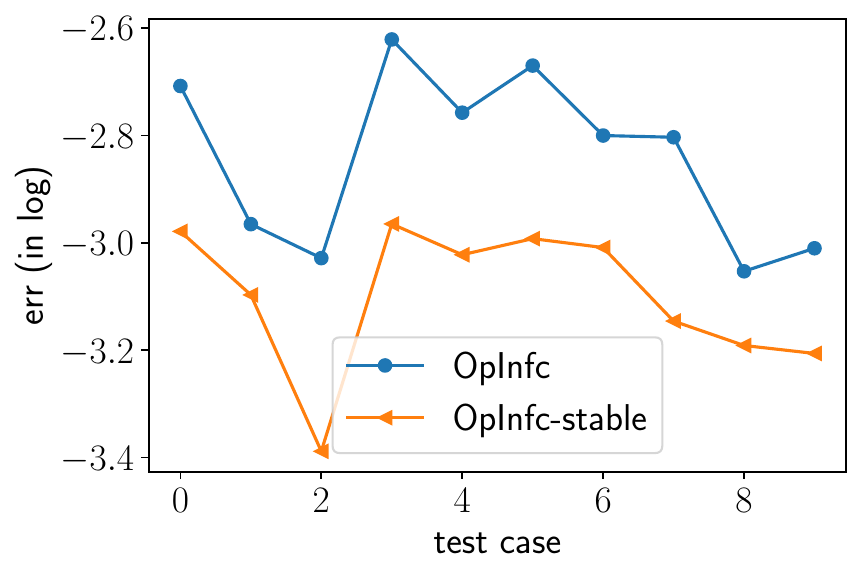}
		\caption{Burgers' example: A comparison of \benchmark~and \newmethod~for $10$ test cases.}
		\label{fig:burgersexample_errorplot}
	\end{center}
\end{figure}

\section{Conclusions}\label{sec:Conc}

In this paper, we introduced a data-driven methodology designed to ensure a bounded stability of the learned quadratic control systems. Firstly, under the assumption that the linear operator is stable and the quadratic operator is energy-preserving, we have showed that quadratic control systems is bounded-input bounded-state stable. Leveraging our previous work \cite{morGoyPB23}, we have parameterized the matrices of a quadratic system, satisfying stability and energy-preserving hypotheses by construction. And we have utilized the matrix parameterizations in a data-driven setting to obtain stable quadratic control systems.
%
We have discussed the effectiveness of our proposed methodology using two numerical examples and have compared the results when stability is not enforced. The results highlight the robust performance ad stability-certificates of the proposed approach, affirming its potential to significantly advance the field of data-driven learning of dynamical systems. 

In our methodology, we require accurate derivative information, which can be difficult to estimate if data are noisy and sparse. To avoid this requirement, we can incorporate integrating scheme  or the concept of neural ODEs \cite{chen2018neural}. With this spirit, methodologies to learn uncontrolled dynamical systems are discussed, e.g., in \cite{goyal2022discovery,uy2022operator} which will be adaopted to controlled cases in our future work.

\bibliographystyle{elsarticle-num} 
\bibliography{mor,igorBiblio}

\end{document}